\theoremstyle{plain}
\newtheorem{theorem}{Theorem}[section]
\newtheorem*{theorem*}{Theorem}
\theoremstyle{definition}
\theoremstyle{remark}
\begin{document}

\runningtitle{Tighter PAC-Bayes Risk Certificates}

\twocolumn[

\aistatstitle{Some theoretical improvements on the tightness of PAC-Bayes risk certificates for neural networks}

\aistatsauthor{ Diego García-Pérez \And Emilio Parrado-Hernández \And  John Shawe-Taylor }

\aistatsaddress{Dept. of Signal Theory \\ and Communications\\ UC3M \\ \texttt{diegoarg@pa.uc3m.es}
\And Dept. of Signal Theory \\ and Communications\\UC3M \\ \texttt{eparrado@ing.uc3m.es} 
\And AI Centre \\ Dept. of Computer Science \\ UCL \\ \texttt{j.shawe-taylor@ucl.ac.uk}  }

]


\begin{abstract}

    This paper presents four theoretical contributions that improve the usability of risk certificates for neural networks based on PAC-Bayes bounds. First, two bounds on the KL divergence between Bernoulli distributions enable the derivation of the tightest explicit bounds on the true risk of classifiers across different ranges of empirical risk. The paper next focuses on the formalization of an efficient methodology based on implicit differentiation that enables the introduction of the optimization of PAC-Bayesian risk certificates inside the loss/objective function used to fit the network/model. The last contribution is a method to optimize bounds on non-differentiable objectives such as the 0-1 loss. These theoretical contributions are complemented with an empirical evaluation on the MNIST and CIFAR-10 datasets. In fact, this paper presents the first non-vacuous generalization bounds on CIFAR-10 for neural networks.
\end{abstract}

\section{Introduction}
\label{intro}

A critical open problem in machine learning is certifying models' performance, particularly evaluating and guaranteeing their generalization capabilities. This is vital in regulated domains such as healthcare (e.g., diagnostic tools), finance (e.g., credit risk models), and autonomous systems (e.g., self-driving vehicles). The true risk of a classifier, defined as the probability of incorrectly classifying any test observation, would be an ideal certification. However, such risk turns out to be impossible to compute as in general the true underlying data distribution is unknown. The PAC-Bayesian framework offers a robust alternative to derive such certificates by producing non-trivial upper bounds on the true risk without requiring costly cross-validations or splitting data into training and testing subsets \citep{alquier2023userfriendlyintroductionpacbayesbounds}.  

The PAC-Bayes framework operates by defining two distributions over the hypothesis space: the prior, which must be independent from the data used to produce the certificate, and the posterior, which may depend on said data. The key to estimating the generalization capability lies in combining the empirical risk of the posterior with the quantification of the divergence between these two distributions. While this paper studies the generalization of the posterior, there are also ``de-randomization'' techniques that study the generalization of a sample from the posterior~\citep{catoni2007pac, blanchard2007occam, viallard2024general}.

PAC-Bayesian analyses have traditionally been effective with simpler models such as kernel methods~\citep{parrado2012pac}, where the prior and posterior distributions can be characterized with relatively few parameters. Extending these techniques to certify deep neural networks (NNs) presents significant challenges due to the complexity and size of modern architectures~\citep{dziugaite2017computing}. The state of the art PAC-Bayes bounds on the true risk of NNs are significantly weaker than those estimations obtained through traditional train/test split methods. 

An approach that is not discussed in this paper, even though it produces models with strong performance and tight bounds, is the use of data-dependent priors~\citep{parrado2012pac, dziugaite2021role}. The reason for this is that when addressing more complex tasks such as classification on CIFAR-10, this approach becomes equivalent to a train-test split pipeline with the added complexity of the PAC-Bayes framework. This can be seen in the work of~\citep{perez2021tighter}, where such approach yields posteriors with a negligible divergence from their priors and no increased performance.

Like most of the recent work in PAC-Bayes, this paper builds on Maurer's bound~\citep{maurer2004bound}. Maurer's bound assumes a bounded loss function, and consequently most results in this line of research are constrained to bounded losses. Some exceptions are~\citep{haddouche2021pac, zhang2023improving}.

In this context this paper aims at augmenting the usability of PAC-Bayes certificates for NNs with the following contributions:

\begin{itemize}
  \item New relaxations of Maurer's bound~\citep{maurer2004bound} that, to our knowledge, yield the tightest closed-form PAC-Bayesian risk bounds.
  \item A new training objective for SGD derived directly from Maurer's bound.
  \item A theoretical analysis of the KL-attenuation trick that motivates a new method to optimize risk bounds on non-differentiable loss functions.
\end{itemize}


\section{Obtaining risk certificates} \label{section: obtaining risk certs}

Consider a set of $n$ samples $S=\{(X_i,Y_i)\}_{i=1}^n$ i.i.d. from an unknown distribution $Z := X \times Y$, where $X_i\in \mathbb R^d$ and $Y\in \{0,1\}$. The goal of our learning algorithm is to find a mapping $h:X\rightarrow Y$ from a hypotheses space $\mathcal{H}$ such that $h(X_t)$ is a good approximation to $Y_t$ for any $(X_t,Y_t) \in Z$. Let us denote by $Q_0$ a data-independent prior distribution over $\mathcal{H}$, and by $Q$ the posterior data-dependent distribution over $\mathcal{H}$ from which the learning algorithm selects the final $h$. The precision of each prediction made by $h$ is measured with a bounded loss function $l: \mathcal{H} \times Z \rightarrow [0, 1]$, and the performance of $h$ with the risk $L(h)$, defined as its expected loss over $Z$: 
\[
L(h) := \mathbb{E}_{z \sim Z}[l(h, z)]
\]
The definition of risk can be extended to distributions over hypotheses: $L(Q) := \mathbb{E}_{h \sim Q}[L(h)] $.
Analogously, the empirical risk of $h$ w.r.t. $S$ is the average of the loss over the elements of $S$: $\hat{L}_S(h) = \frac{1}{n} \sum_{i=1}^n{l(h,S_i)}$. Likewise, $\hat{L}_S(Q) := \mathbb{E}_{h \sim Q}[\hat{L}_S(h)]$. 

The Kullback-Leibler divergence between two distributions is represented by $\mbox{KL}(\cdot || \cdot)$. If these distributions are Bernoulli with means $p$ and $q$, respectively, we will use:
\[
    \mbox{kl}(p||q) = p\log\left (\frac{p}{q}\right) + (1-p)\log\left(\frac{1-p}{1-q}\right)
    \quad p, q \in (0,1)
\]

The classical ``PAC-Bayes-kl'' bound with confidence $1-\delta$ was introduced by~\cite{langford2001bounds}, and slightly improved by~\cite{maurer2004bound}:

\begin{theorem}[PAC-Bayes-kl] \label{thm: PAC-Bayes-kl}
    Let $S=\{(X_i,Y_i)\}_{i=1}^n \overset{\text{i.i.d.}}{\sim} Z$ with $n \geq 8$. For any distribution $Q_0$ independent from $S$, any distribution $Q$ and a bounded loss function $0 \leq l \leq 1$, it holds with probability $1-\delta$ that:

\begin{equation} \label{eq: kl-bound}
    \textup{kl}(\hat{L}_S(Q)||L(Q)) \leq \frac{\textup{KL}(Q||Q_0) + \log(\frac{2\sqrt{n}}{\delta})}{n}
\end{equation}

\end{theorem}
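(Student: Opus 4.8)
The plan is to follow the classical recipe behind Maurer's bound: reduce the statement about the randomized classifier $Q$ to a statement about a single fixed hypothesis drawn from the data-independent prior $Q_0$, control an exponential moment for that fixed hypothesis, and then transfer the control to the posterior $Q$ via a change-of-measure inequality. The structural point that makes the whole argument work is that $Q_0$ does not depend on $S$, so expectations over $h\sim Q_0$ and over $S$ may be freely interchanged.

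First I would establish the central moment lemma: for any fixed $h$ with true risk $L(h)=p$, the empirical risk $\hat{L}_S(h)$ is an average of $n$ i.i.d. random variables taking values in $[0,1]$ with mean $p$, and
\[
\mathbb{E}_S\!\left[e^{\,n\,\mathrm{kl}(\hat{L}_S(h)\,\|\,L(h))}\right] \le 2\sqrt{n}, \qquad n\ge 8.
\]
The natural route is to treat the Bernoulli (0--1 loss) case first, where $n\hat{L}_S(h)$ is Binomial$(n,p)$. Here there is a pleasant cancellation: writing $e^{\,n\,\mathrm{kl}(k/n\,\|\,p)}=(k/n)^k(1-k/n)^{n-k}\,p^{-k}(1-p)^{-(n-k)}$ and multiplying by the binomial weight $\binom{n}{k}p^k(1-p)^{n-k}$ makes the dependence on $p$ vanish, leaving $\mathbb{E}_S[e^{\,n\,\mathrm{kl}}]=\sum_{k=0}^n \binom{n}{k}(k/n)^k(1-k/n)^{n-k}$. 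Bounding this $p$-free sum by $2\sqrt{n}$ for $n\ge 8$ is a Stirling-type estimate, since each term behaves like $\sqrt{n/(2\pi k(n-k))}$ and the sum is asymptotically $\sqrt{\pi n/2}$. To pass from the Bernoulli case to an arbitrary loss in $[0,1]$ I would use a convexity argument showing that, among all distributions on $[0,1]$ with a fixed mean $p$, the exponential moment is maximized by the two-point distribution on $\{0,1\}$. This moment lemma is where I expect essentially all the technical effort to concentrate.

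With the lemma in hand the rest is routine. Because $Q_0$ is independent of $S$, Fubini gives $\mathbb{E}_S\,\mathbb{E}_{h\sim Q_0}[e^{\,n\,\mathrm{kl}(\hat{L}_S(h)\|L(h))}]=\mathbb{E}_{h\sim Q_0}\,\mathbb{E}_S[\cdots]\le 2\sqrt{n}$, and a single application of Markov's inequality yields that, with probability at least $1-\delta$ over $S$, $\mathbb{E}_{h\sim Q_0}[e^{\,n\,\mathrm{kl}(\hat{L}_S(h)\|L(h))}]\le 2\sqrt{n}/\delta$. On this same high-probability event I would invoke the Donsker--Varadhan change-of-measure inequality with $\phi(h)=n\,\mathrm{kl}(\hat{L}_S(h)\|L(h))$, namely $\mathbb{E}_{h\sim Q}[\phi(h)]\le \mathrm{KL}(Q\|Q_0)+\log\mathbb{E}_{h\sim Q_0}[e^{\phi(h)}]$, which holds simultaneously for every posterior $Q$. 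Substituting the Markov bound gives $\mathbb{E}_{h\sim Q}[n\,\mathrm{kl}(\hat{L}_S(h)\|L(h))]\le \mathrm{KL}(Q\|Q_0)+\log(2\sqrt{n}/\delta)$.

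The final step is Jensen's inequality applied to the jointly convex map $\mathrm{kl}(\cdot\|\cdot)$, which gives $\mathrm{kl}(\hat{L}_S(Q)\|L(Q))=\mathrm{kl}(\mathbb{E}_{h\sim Q}\hat{L}_S(h)\,\|\,\mathbb{E}_{h\sim Q}L(h))\le \mathbb{E}_{h\sim Q}[\mathrm{kl}(\hat{L}_S(h)\|L(h))]$; dividing the preceding display by $n$ then delivers the claimed certificate. The only genuinely hard ingredient is the moment lemma, and within it the $p$-free sum estimate $\sum_{k}\binom{n}{k}(k/n)^k(1-k/n)^{n-k}\le 2\sqrt{n}$, which is precisely where the threshold $n\ge 8$ and the constant $2$ in the $\log(2\sqrt{n}/\delta)$ term originate.
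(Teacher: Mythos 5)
Your proof is correct: the moment lemma with its $p$-free binomial sum, the Fubini--Markov step exploiting the data-independence of $Q_0$, the Donsker--Varadhan change of measure holding simultaneously for all posteriors, and the final Jensen step via joint convexity of $\mathrm{kl}$ together form a complete and valid derivation of the stated bound. Note, however, that the paper offers no proof of this theorem to compare against---it is quoted as a known result of Langford (2001) as refined by Maurer (2004)---and your reconstruction is essentially the standard argument from those cited sources, with the hard technical core (the reduction from $[0,1]$-valued losses to the Bernoulli case by convexity, and the estimate $\sum_{k}\binom{n}{k}(k/n)^k(1-k/n)^{n-k}\le 2\sqrt{n}$ for $n\ge 8$, which you correctly identify as the origin of both the constant $2$ and the sample-size threshold) being exactly Maurer's contribution.
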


For the remainder of the paper, $K$ denotes the right-hand side of ineq.~\eqref{eq: kl-bound}:

\[
    K := \frac{\mbox{KL}(Q||Q_0) + \log(\frac{2\sqrt{n}}{\delta})}{n}
\]

Since $\textup{kl}(\hat{L}_S(Q)||\hat{L}_S(Q)) = 0$, we assume $0 < \hat{L}_S(Q) \leq L(Q) < 1$ when discussing any upper bound on $L(Q)$ without loss of generality.

Although $L(Q)$ cannot be computed without access to $Z$, Theorem \ref{thm: PAC-Bayes-kl} gives an upper bound on $L(Q)$. However, because the bound is not explicit on $L(Q)$, it may be bothersome in practice. A common workaround to obtain an explicit bound on $L(Q)$ is to find a lower bound to the left side of \eqref{eq: kl-bound} and solve for $L(Q)$. Another workaround is to consider the following inequality equivalent to \eqref{eq: kl-bound}
\begin{equation} \label{eq: second ineq}
    L(Q) \leq \mbox{kl}^{-1}(\hat{L}_S(Q), K)
\end{equation}
where $\mbox{kl}^{-1}$ is precisely the function that makes both inequalities equivalent, that is, $\mbox{kl}^{-1}(p, k) = \mbox{sup}\{q \in [p, 1): \mbox{kl}(p||q) \leq k\}$. In fact, due to the convexity and non-negativity of the divergences, $\mbox{kl}(p||q)=0 \iff p=q$ implies $\mbox{kl}(p||q)$ is strictly increasing for $q>p$. This means $\mbox{kl}^{-1}(p, k) := q \mbox{ such that } \mbox{kl}(p||q) \leq k \mbox{ and } q\geq p$ is equivalent and well-defined. The inversion of the binary KL of \cite{seeger2002klinv}, $\mbox{kl}^{-1}(p, k) := \Delta \mbox{ such that } \mbox{kl}(p||p+\Delta) \leq k \mbox{ and } \Delta \geq 0$, while not consistent with the inverse of a function, simplifies the proofs in this paper. The second workaround involves upper-bounding the right-hand side of \eqref{eq: second ineq} \citep{alquier2023userfriendlyintroductionpacbayesbounds,hellström2024generalizationboundsperspectivesinformation}. Moreover, this explicit bound can then be used to optimize Theorem~\ref{thm: PAC-Bayes-kl} with respect to $Q$. This is possible because, although an analytical expression for $\mbox{kl}^{-1}(p, k)$  is not known, it can be approximated numerically.


\subsection{Relaxing the inequalities} \label{subsection: inequalities}

This subsection reviews the most common relaxations of inequalities \eqref{eq: kl-bound} and \eqref{eq: second ineq}. The review is non-exhaustive, and does not cover parametric relaxations such as the PAC-Bayes-$\lambda$ bound  \citep{thiemann2017strongly}, which is closely related to KL-modulation (see subsection \ref{subsect: kl mod method}). A key point here is that any lower bound for the left-hand side of ineq.~\eqref{eq: kl-bound} that admits a closed-form solution for $L(Q)$ can be turned into an upper bound for the right-hand side of ineq.~\eqref{eq: second ineq}, and any upper bound for the right-hand side of \eqref{eq: second ineq} that admits a closed-form solution for $K$ can be turned into a lower bound for the right-hand side of \eqref{eq: kl-bound}. Since we need a closed-form solution for $L(Q)$ but not for $K$, the second approach is stronger than the first and should be the tool to derive even tighter bounds in the future. Each bound presented in this section admits a closed form for $K$, and both forms are provided.

\subsubsection{Pinsker's inequality}
 
Consider Pinsker's inequality, a classical lower bound on the KL divergence that applied to two Bernoulli distributions with means $p$ and $q$ yields $\mbox{kl}(p||q) \geq 2(q-p)^2$. Applying this lower bound to ~\eqref{eq: kl-bound} produces a version of the classic McAllester bound~\citep{mcallester1999bound} with improved constants, highlighting the potential of Maurer's bound \citep{dziugaite2017computing}:

\begin{equation} \label{eq: classic_pinsker}
    L(Q) \leq \hat{L}_S(Q) + \sqrt{\frac{K}{2}}
\end{equation}

This bound is equivalent to upper bounding the right-hand side of \eqref{eq: second ineq} by $\mbox{kl}^{-1}(p,k) \leq p + \sqrt{k/2}$.

\subsubsection{Refined Pinsker's inequality} \label{subsect: RP bound}

Applying a refinement of the Pinsker inequality, $\mbox{kl}(p||q) \geq (q - p)^2/(2q)$, to the left-hand side of \eqref{eq: kl-bound} and solving for $L(Q)$ yields the PAC-Bayes-quadratic (PBQ) bound \citep{rivasplata2019pac}:

\begin{equation} \label{eq: refined_pinsker}
    L(Q) \leq \left(\sqrt{\hat{L}_S(Q) + \frac{K}{2}} + \sqrt{\frac{K}{2}}\right)^2
\end{equation}

This bound is tighter than \eqref{eq: classic_pinsker} when and only when $L(Q) < 1/4$, which is often the case in practice. In fact, the objective functions derived from \eqref{eq: refined_pinsker} lead to much better results than those derived from \eqref{eq: classic_pinsker}~\citep{perez2021tighter}.

Notice \eqref{eq: refined_pinsker} is equivalent to upper bounding the right-hand side of \eqref{eq: second ineq} by $\mbox{kl}^{-1}(p,k) \leq p + k + \sqrt{2pk + k^2} = (\sqrt{p + k/2} + \sqrt{k/2})^2$.

\subsubsection{Tolstikhin and Seldin's (TS) bound} \label{subsect: TS bound}

\cite{tolstikhin2013bound} produce a PAC-Bayesian bound by applying $\mbox{kl}^{-1}(p,k) \leq p + 2k + \sqrt{2pk}$ to the right-hand side of \eqref{eq: second ineq}:

\begin{equation}\label{eq: ts-bound}
    L(Q) \leq \hat{L}_S(Q) + 2K + \sqrt{2\hat{L}_S(Q)K}
\end{equation}

This is equivalent to lower-bounding the left-hand side of \eqref{eq: kl-bound} by $\mbox{kl}(p||q) \geq (2q - p - \sqrt{4qp - 3p^2})/4$.

It has been described as ``better''~\citep{alquier2023userfriendlyintroductionpacbayesbounds} and ``more refined''~\citep{hellström2024generalizationboundsperspectivesinformation} than \eqref{eq: classic_pinsker}, the bound derived from the classic Pinsker inequality. However, this tighter behavior strongly depends on the values of $\hat{L}_S(Q)$ and $K$.

\begin{figure}
\centering
\includegraphics[height=6cm]{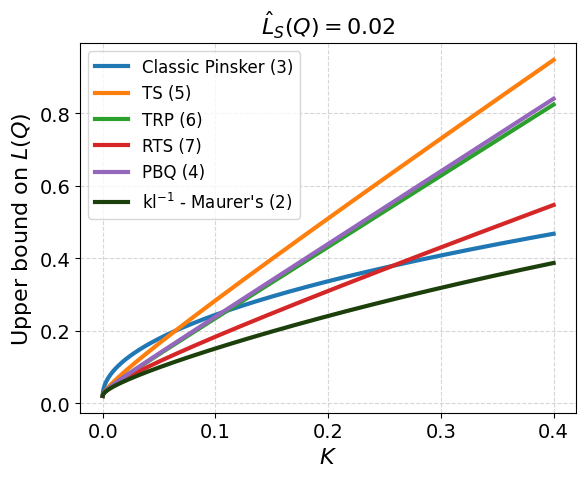}
\includegraphics[height=6cm]{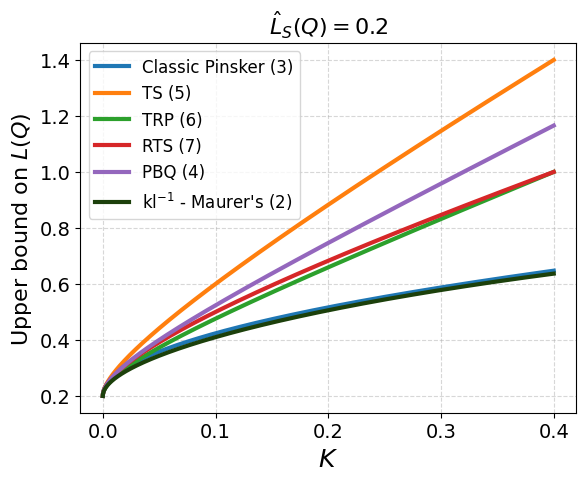}
\caption{Different upper bounds on the true risk of the posterior over the hypothesis, $L(Q)$, as a function of the right hand side of \eqref{eq: kl-bound}, $K$, for a small (top) and a large (bottom) empirical risk $\hat{L}_S(Q)$. The dark green curve represents Maurer's bound, of which the rest are relaxations of, so the closer to the dark green curve the better the relaxation.}
\label{fig:upperbounds}
\end{figure}


\section{Tighter PAC-Bayes bounds}  \label{section: contributions}




The first contribution of the paper is a new, universally tighter bound than the one in \eqref{eq: refined_pinsker}.

\begin{theorem} \label{thm: TRP ineq}
Let $f(p,q) := \frac{(q-p)^2}{2q(1-p)}$. Then
\begin{equation*}
  \mbox{kl}(p||q) \geq f(p,q) \qquad 0 < p \leq q < 1
\end{equation*}
\end{theorem}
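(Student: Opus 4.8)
The plan is to fix $p \in (0,1)$ and treat both sides as functions of $q$ on $[p,1)$, showing that the gap $g(q) := \mbox{kl}(p||q) - f(p,q)$ is nonnegative by a first-order argument. The boundary case is immediate: at $q = p$ we have $\mbox{kl}(p||p) = 0$ and $f(p,p) = 0$, so $g(p) = 0$. It then suffices to prove that $g$ is nondecreasing on $[p,1)$, i.e. that $g'(q) \geq 0$ for all $q \geq p$; combined with $g(p)=0$ this forces $g(q) \geq 0$, which is exactly the claimed inequality.

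For the derivative of the divergence I would use the standard identity $\frac{d}{dq}\mbox{kl}(p||q) = -\frac{p}{q} + \frac{1-p}{1-q} = \frac{q-p}{q(1-q)}$. Differentiating $f(p,q) = \frac{(q-p)^2}{2q(1-p)}$ with $p$ held fixed gives, after simplifying the quotient $\frac{d}{dq}\frac{(q-p)^2}{q} = \frac{(q-p)(q+p)}{q^2}$, the expression $\frac{\partial f}{\partial q} = \frac{(q-p)(q+p)}{2(1-p)q^2}$. Subtracting these and pulling out the common factor $\frac{q-p}{q} \geq 0$ leaves a single bracketed term, $\frac{1}{1-q} - \frac{q+p}{2(1-p)q}$, whose sign is all that remains to determine.

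The crux is the algebraic simplification of that bracket. Clearing the positive denominator $2(1-p)q(1-q)$ reduces its numerator to $2(1-p)q - (q+p)(1-q)$, which I expect to factor cleanly as $(q+1)(q-p)$. Granting this factorization, the whole derivative collapses to $g'(q) = \frac{(q-p)^2(q+1)}{2(1-p)q^2(1-q)}$, which is manifestly nonnegative on the region $0 < p \leq q < 1$ since every factor is nonnegative there. Hence $g$ is nondecreasing from $g(p)=0$, giving $g(q) \geq 0$ and the theorem.

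The only genuine obstacle is the bookkeeping in that factorization: the entire sign conclusion hinges on the numerator collapsing exactly to the product $(q+1)(q-p)$, and a small slip in the expansion would destroy the argument. I would verify it directly by expanding $(q+1)(q-p) = q^2 + q - pq - p$ and matching it term by term against $2(1-p)q - (q+p)(1-q) = 2q - 2pq - (q + p - q^2 - pq) = q^2 + q - pq - p$. Everything else — the endpoint value and the passage from $g' \geq 0$ to $g \geq 0$ — is routine.
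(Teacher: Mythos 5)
Your proof is correct and takes essentially the same approach as the paper's: fix $p$, note that the gap $g(q) = \mbox{kl}(p||q) - f(p,q)$ vanishes at $q = p$, and show it is nondecreasing in $q$ on $[p,1)$. The only (cosmetic) difference is that the paper parametrizes by $\Delta = q - p$ and settles for an inequality lower bound on the derivative, whereas you compute it exactly and factor it as $\frac{(q-p)^2(q+1)}{2(1-p)q^2(1-q)} \geq 0$, which is arguably cleaner.
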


\begin{proof}
See Appendix A.
\end{proof}

 The new $1-p$ term makes the improvement over \eqref{eq: refined_pinsker} uniform in $\{(p,q): 0 < p \leq q < 1\}$. Now, applying the new bound to \eqref{eq: kl-bound} and solving for $L(Q)$ yields the Tighter Refined Pinsker (TRP) bound:

\small
\begin{equation}\label{eq: TRP1}
    L(Q) \leq \left(\sqrt{\hat{L}_S(Q) + \frac{K(1-\hat{L}_S(Q))}{2}} + \sqrt{\frac{K(1-\hat{L}_S(Q))}{2}}\right)^2
\end{equation}
\normalsize

This is equivalent to upper bounding the right-hand side of \eqref{eq: second ineq} by $\mbox{kl}^{-1}(p,k) \leq p + (1-p)k + \sqrt{2p(1-p)k + k^2(1-p)^2} = (\sqrt{p + k(1-p)/2} + \sqrt{(1-p)k/2})^2$.

Figure \ref{fig:upperbounds} shows that the TRP bound is universally tighter than PBQ, and that this difference is most noticeable for larger values of $\hat{L}_S(Q)$.

\begin{figure}
\centering
\includegraphics[height=6cm]{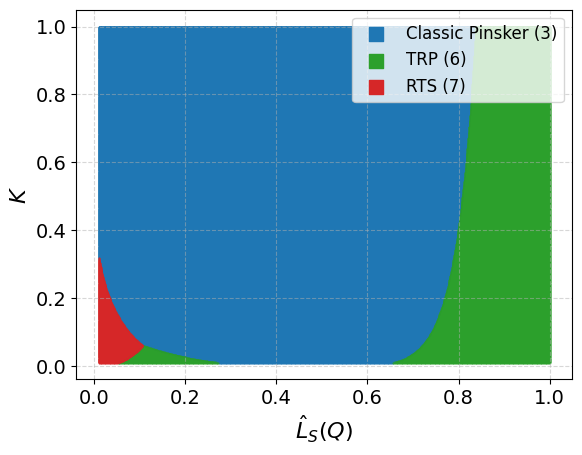}
\caption{Map with the tightest bound (out of all the bounds studied in the paper) for different values of $K$ (right-hand side of \eqref{eq: kl-bound}) and $\hat{L}_S(Q)$.}
\label{fig:tightest_bounds}
\end{figure}


The second contribution of the paper is a new, universally tighter bound than~\eqref{eq: refined_pinsker} and~\eqref{eq: ts-bound}, and tighter than~\eqref{eq: TRP1} for smaller ranges of the empirical risk $\hat{L}_S(Q)$:
\begin{equation}\label{eq: TRP2}
 L(Q) \leq \hat{L}_S(Q) + K + \sqrt{2\hat{L}_S(Q)K}
 \end{equation}
 
 This bound, which we call Refined Tolstikhin-Seldin (RTS) bound, can be obtained by lower-bounding the left-hand side of~\eqref{eq: kl-bound} by $\mbox{kl}(p||q) \geq q - \sqrt{2qp - p^2}$.
 
\begin{theorem} \label{thm: rts}
Let $f(p,q) := q - \sqrt{2qp - p^2}$. Then,
\begin{equation*}
\mbox{kl}(p||q) \geq f(p,q) \qquad 0 < p \leq q < 1
\end{equation*}
\end{theorem}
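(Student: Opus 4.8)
The plan is to fix $p \in (0,1)$ and show that the gap $g(q) := \mathrm{kl}(p\|q) - f(p,q)$ is nonnegative on $[p,1)$. Since $\mathrm{kl}(p\|p)=0$ and $f(p,p) = p - \sqrt{2p^2 - p^2} = 0$, we have $g(p) = 0$, so it suffices to prove that $g$ is nondecreasing, i.e.\ $g'(q) \ge 0$ for $q \in (p,1)$. Using $\partial_q \mathrm{kl}(p\|q) = (q-p)/\bigl(q(1-q)\bigr)$ together with $\partial_q f(p,q) = 1 - p/\sqrt{2qp - p^2}$, the target $g'(q)\ge 0$ rearranges to $p/\sqrt{2qp-p^2} \ge (p - q^2)/\bigl(q(1-q)\bigr)$.

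First I would dispose of the easy regime: when $q \ge \sqrt p$ the right-hand side is nonpositive while the left-hand side is nonnegative, so the inequality is immediate. In the remaining regime $p \le q < \sqrt p$ both sides are strictly positive, so I can square and clear the positive denominators — note $2qp - p^2 = p(2q-p) > 0$ since $q \ge p$ — reducing the claim to the polynomial inequality $P(p,q) := pq^2(1-q)^2 - (p-q^2)^2(2q-p) \ge 0$.

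The key step, which makes the argument clean rather than a brute-force root analysis, is the factorization
\[
  P(p,q) = (p-q)\bigl(p^2 - pq(1+2q) + 2q^4\bigr).
\]
Since $p \le q$ the first factor is $\le 0$, so it remains to show the quadratic factor $\psi(p) := p^2 - pq(1+2q) + 2q^4$ is $\le 0$ at our point. Here I would exploit that $\psi$ is a convex (upward) parabola in $p$ and that in this regime $p$ lies in $[q^2, q]$ (because $q^2 < p \le q$). A convex function that is nonpositive at both endpoints of an interval is nonpositive throughout it, so it is enough to check the endpoints: $\psi(q) = 2q^3(q-1) \le 0$ and $\psi(q^2) = q^3(q-1) \le 0$, both using $q<1$. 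This closes the nontrivial case and hence yields $g'(q)\ge 0$.

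I expect the main obstacle to be discovering the factorization of $P$. The tangency of $f$ to $\mathrm{kl}(p\|\cdot)$ at $q=p$ (the squared inequality is an equality there, equivalently $P(p,p)=0$) forces $(p-q)$ to divide $P$, and recognizing this lets one peel off a single quadratic factor that is then controlled by convexity and two one-line endpoint evaluations instead of by solving for its roots explicitly. A secondary point requiring care is the sign bookkeeping when squaring, which is precisely why the case split on the sign of $p - q^2$ must be carried out before squaring.
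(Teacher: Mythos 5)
Your proof is correct, and it shares the paper's outer skeleton---fix $p$, observe that the gap $g(q) = \mbox{kl}(p||q) - f(p,q)$ vanishes at $q=p$, and show $g' \geq 0$ to conclude by monotonicity---but the core step, establishing $g' \geq 0$, is done by a genuinely different route. The paper parameterizes $q = p + \Delta$ and dispatches the derivative in two lines with no case analysis: it drops the nonnegative term $\Delta/(1-p-\Delta)$ and then observes that $p/(p+\Delta) = p/\sqrt{(p+\Delta)^2} \leq p/\sqrt{(p+\Delta)^2 - \Delta^2} = p/\sqrt{p^2+2p\Delta}$; in other words, the entire inequality reduces to recognizing the identity $2qp - p^2 = q^2 - (q-p)^2 \leq q^2$. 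Your route instead splits on the sign of $p - q^2$, squares in the nontrivial regime $q^2 < p \leq q$, and reduces to the polynomial inequality $P(p,q) \geq 0$, which you settle via the factorization $P = (p-q)\,\psi(p)$ with $\psi(p) = p^2 - pq(1+2q) + 2q^4$, followed by a convexity-and-endpoints check of $\psi$ on $[q^2, q]$. I verified the factorization by expansion and the endpoint evaluations $\psi(q) = 2q^3(q-1) \leq 0$ and $\psi(q^2) = q^3(q-1) \leq 0$; all are correct, and your sign bookkeeping (both sides positive before squaring, $2q-p > 0$ for clearing denominators) is in order. What the paper's trick buys is brevity and the absence of any discovery step; what your argument buys is a fully mechanical verification---once the factorization is guessed (and, as you note, the tangency at $q=p$ guarantees the factor $p-q$), every remaining step is a routine check---and the convexity-endpoint device elegantly avoids any explicit root analysis of the quadratic.
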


\begin{proof}
See Appendix A.
\end{proof}

Figure \ref{fig:upperbounds} shows that for a larger value of the empirical risk, the classic bound \eqref{eq: classic_pinsker} is significantly tighter than the other bounds analyzed in the paper, and closely follows the inverse of the KL. When the empirical risk is smaller, RTS is the tightest for small values of $K$ and the classic bound \eqref{eq: classic_pinsker} is the tightest for increasing values of $K$. With respect to the bounds proposed in the paper, Figure \ref{fig:upperbounds} shows that they are universally tighter than their counterparts: RTS is tighter than TS bound and TRP is tighter than PBQ.

Moreover, Figure \ref{fig:tightest_bounds} shows that just three of the bounds analyzed in the paper suffice to cover the whole range of values of $K$ and $\hat{L}_S(Q)$: The classic Pinsker relaxation of \eqref{eq: classic_pinsker} covers the largest area while the bounds introduced in this paper are the tightest for small values of $K$ and $\hat{L}_S(Q)$, as well as for large values of $\hat{L}_S(Q)$.


\section{Objective functions that optimize risk certificates} \label{sec: optim}

Closed-form approximations to the inverse of the KL enable the minimization of the bound on $L(Q)$ through gradient descent. This way, one can fit NNs with specific objective functions that optimize any of the risk certificates expressed by these bounds \citep{perez2021tighter}. However, relaxing~\eqref{eq: kl-bound} to obtain said closed-form expression can result detrimental. An alternative can be the direct calculation of the gradient of the bound on $L(Q)$ with respect to $\theta$, the parameters that define $Q$.

Let $q$ be the bound on $L(Q)$ given in \eqref{eq: kl-bound}, for an empirical loss $p := \hat{L}_S(Q)$ and a right-hand term $K$:

\begin{equation} \label{eq: maurer objective}
    \nabla_\theta q = \xi \left (   \left ( \log \frac{1-p}{1-q}-\log \frac{p}{q} \right ) \nabla_\theta p + \nabla_\theta K \right )
\end{equation}

where all the gradients are computed \textit{w.r.t.} $\theta$, and 

\begin{equation*} 
    \xi = \left (
           \frac{1-p}{1-q} - \frac{p}{q}
         \right )^{-1}
\end{equation*}

See Appendix B for more details. 

This result is a generalization of Equation (5) of~\citep{germain2009pac}. Germain et al. apply a similar method to optimize a bound on the probit loss of a spherical gaussian posterior with identity covariance matrix, but the technique seems to have gone unnoticed by more recent works.

\subsection{Insights on the optimization of non-differentiable functions} \label{subsect.: insights}

The reason why a better approximation $f(\hat L_S(Q), K) \approx \mbox{kl}^{-1}(\hat L_S(Q), K)$ produces a better risk certificate through gradient descent is not that the error $|f - \mbox{kl}^{-1}|$ is lower, but that the direction of the gradient of $L(Q)$ with respect to $\theta$ deviates less from the one derived directly from Maurer's bound. This gradient can be regarded as a weighted sum of the gradients of $\hat{L}_S(Q)$ and $K$ (of course the coefficients of said weighted sum would depend heavily on the values of $\hat{L}_S(Q)$ and $K$). Then, the role of our approximation $f$ will be to estimate the ratio between both coefficients of the weighted average. In those cases where $f$ consistently over or underestimates this ratio, corrections should be applied.

This idea of correcting the relative weights of empirical loss and KL is not new. It has been used many times to derive risk certificates in the Bayes-by-backpropagation literature~\citep{blundell2015weight, perez2021tighter}. However, the motivations given in these works to justify this correction are fundamentally empirical. The strength of the correction is commonly optimized through grid search, or just fixed to a value that ``seems to work well''. We argue that an unbiased objective function does not need an arbitrary ``KL-attenuating coefficient'' (inequalities in Sections \ref{section: obtaining risk certs} and \ref{section: contributions} are biased by nature, as they are bounds), and if one seeks to optimize a function with unknown gradient through a surrogate function with known gradient, then all efforts should be devoted to the estimation of the relationship between both gradients, rather than to the optimization of the scaling of the KL term in the gradient of the surrogate function.

Problem \eqref{eq: maurer objective} expresses the gradient of a training objective as a function of $q$, $p$, $\nabla p$ and $\nabla K$. The terms $q$, $p$ and $\nabla K$ are computable, but we require the empirical risk $p$ to be differentiable for $\nabla p$ to exist. This is also a requisite in the traditional machine learning setting. A common procedure to optimize a non-differentiable loss, like the zero-one loss $l^{01}$, is to assume that the minima of the non-differentiable loss will align with those of a differentiable one, like the cross-entropy loss $l^{xe}$, and optimize for the latter expecting to optimize for the former. However, this is not the case with PAC-Bayes bounds.

To work around this issue, consider the following setting: let $l^t$ be a non-differentiable loss function whose bound on the true risk $L^t(Q)$ we seek to optimize, and $l^{d}$ be a differentiable loss function. Let us denote by $\hat{L}_S^t$ and $\hat{L}_S^d$ the empirical risks corresponding to $l^t$ and $l^d$, respectively. To use $l^{d}$ as a surrogate loss, one must build a differentiable function $r(\hat{L}_S^{d}) \approx \hat{L}_S^t$. Now, applying the chain rule to \eqref{eq: maurer objective} yields the following gradient $\nabla q_{t}$ for loss $l^t$:

\begin{equation} \label{eq: maurer acc objective}
    \nabla q_{t} \approx \xi_t \left (   \left ( \log \frac{1-p_t}{1-q_t} - \log \frac{p_t}{q_t} \right ) r^\prime(p_d)\nabla p_d + \nabla K \right )
\end{equation}

where subscripts $d$ and $t$ assigns the corresponding term to the implicit function for loss $l^d$ or $l^t$, respectively, and 

\[
   \xi_t = \left (
           \frac{1-p_t}{1-q_t} - \frac{p_t}{q_t}
         \right )^{-1}
\]

\subsection{The KL-modulating method} \label{subsect: kl mod method}

 Consider you have found an arbitrarily good estimator $r$ in equation~\eqref{eq: maurer acc objective}. For a fixed $\theta \in \Theta$, the gradient of the arbitrarily good resulting objective function can be computed as $c_L \nabla \hat{L}_S^{d} + c_K \nabla K$. Now, consider the gradient of a different objective function evaluated at the same $\theta$, $b_L \nabla \hat{L}_S^{d} + b_K \nabla K$. In general, for any $\theta \in \Theta$, there exists a KL-attenuating coefficient $\eta$ that verifies $c_L \nabla \hat{L}_S^{d} + c_K \nabla K \propto b_L \nabla \hat{L}_S^{d} + \eta b_K \nabla K$, which is $\eta = c_Kb_L/(c_Lb_K)$. Consequently, for any $\theta^*$ that is a local minimum of the first objective function, there is a value of $\eta$ that makes $\theta^*$ a critical point of the second objective function. In the case of good minima, it is more than reasonable to assume that $\theta^*$ will still be a minimum in the worse KL-attenuated objective function. Therefore, applying the KL modulating method with the optimal $\eta$ to a surrogate function is sufficient to find any good local minimum, including the best possible solution.

This method is commonly known as ``KL-attenuating'', because in the literature the optimal value of $\eta$ is generally less than 1. However, the experiments in Section~\ref{section: experiments} show that the optimal value of $\eta$ to minimize a risk certificate may be greater than one, so we rename it ``KL modulating''.

\begin{figure}[!h]
\centering
\includegraphics[height=6cm]{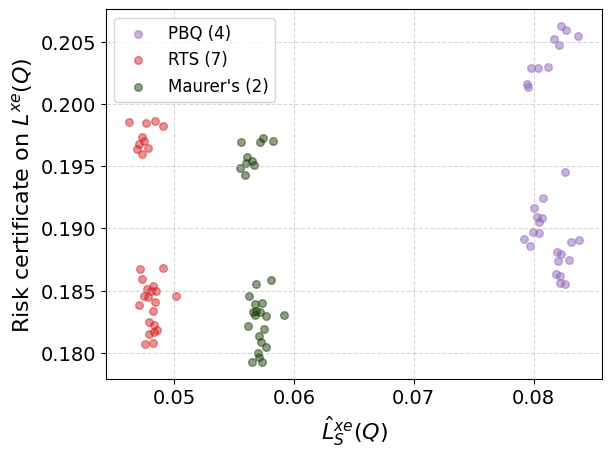}
\includegraphics[height=6cm]{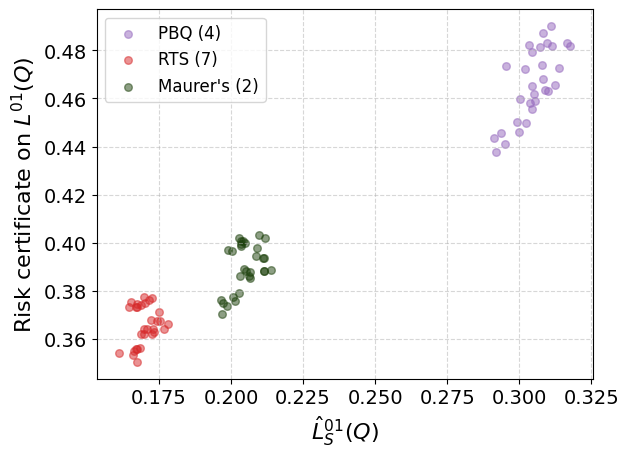}
\caption{Risk certificates on cross-entropy loss (top) and zero-one loss (bottom) on different experiment runs. Color indicates training objective. The training objective functions work with $L^{xe}$ in both plots, which explains why the RTS bound outperforms the stronger bypassed Maurer's bound in the right plot.}
\label{fig:experiment_rand_xe}
\end{figure}

\section{Experiments} \label{section: experiments}
This section gives some insights on the quality of the theoretical contributions of the paper by showing their performance in the certification of NNs trained on MNIST \citep[][CC3.0 License]{deng2012mnist} and CIFAR-10 \citep[][MIT License]{krizhevsky2009learning}. \footnote{Code to replicate all experiments in this paper is available at \texttt{github.com/Diegogpcm/pacbayesgradients}}
 
In all experiments, the prior distribution on the classifiers, $Q_0$, is the product of the individual priors on each weight of the network. Each individual prior is a Gaussian $\mathcal G(m, \sigma^2)$ where $\sigma$ is explored in $[0.02, 0.08]$ (every prior has the same $\sigma$ in a given experiment run) and $m$ is randomly sampled from $\mathcal G(0, 1/n_{in})$, where $n_{in}$ is the dimension of the input to the corresponding layer. 

The posterior $Q$ is a Gaussian with diagonal covariance. The mean and covariance of $Q$ are initialized to be those of $Q_0$ and optimized through 100 epochs of stochastic gradient descent on the corresponding objective function. We evaluate three of these objective functions: 
\begin{itemize}
    \item $f_{pbq}$, the PBQ bound of \eqref{eq: refined_pinsker}, that serves as baseline for the proposals of this work.
    \item $f_{rts}$, the RTS bound of \eqref{eq: TRP2}.
    \item $f_{mb}$, resulting from the application of the procedure described in \eqref{eq: maurer objective} to the Maurer's bound of \eqref{eq: kl-bound}.
\end{itemize}
All risk certificates in the results are obtained with a confidence factor of $\delta=0.025$. Moreover, the empirical risks in the bounds are estimated using Monte Carlo with $150.000$ models sampled from $Q$, following the same procedure as \cite{perez2021tighter}. 

Some experiments provide confidence intervals calculated with 10 different seeds for the random number generator. Reported values are formatted as $\mu \pm 2\sigma$, where $\mu$ represents the mean of the results and $\sigma$ the standard deviation. Notice these confidence intervals have nothing to do with the confidence in the bound itself, which is already set to be $1-\delta = 0.975$. The relative size of these intervals also gives insight on the sensitivity of the final value of the bound to randomized elements of the learning algorithm, such as the prior initialization and the ordering of the samples during training. 

All experiment runs are done on NVIDIA RTX 4090 GPUs and each run takes a few minutes.

\subsection{MNIST}

The first set of experiments consists of fitting 3-layer MLPs with 600 neurons in each hidden layer on the MNIST dataset. 

\begin{table}[h!]
\centering
\caption{Results for the three objective functions used to train the MLP with the MNIST data. The first five rows show information about the tightness of the risk certificate of the posterior $Q$, namely the bound for the cross-entropy loss ($L^{xe}$), the bound for the zero-one loss ($L^{01}$), plus the corresponding empirical risks ($\hat{L}_S^{01}(Q)$ and $\hat{L}_S^{xe}(Q)$) and penalty ($\textup{KL}/n$) used for their computation. Next rows show the accuracy, measured as the $l^{xe}$ or the $l^{01}$ averaged over the test set, of 4 classification strategies implemented with each model (Gibbs classifier, deterministic classifier, ensemble and average of the prior).}

\begin{tabular}{c c c c c} 
\hline
 - & Metric & $f_{pbq}$ & $f_{rts}$ & $f_{mb}$\\ [0.5ex] 
 \hline
 \multirow{3}{*}{Risk bound} & $L^{xe}$ & 0.171 & 0.167 & \textbf{0.165} \\
                               & $L^{01}$ & 0.425 & \textbf{0.335} & 0.356 \\
                               & $\mbox{KL}/n$ & 0.033 & 0.065 & 0.052 \\
 \hline

 \multirow{1}{*}{Emp. R} & $\hat{L}_S^{xe}(Q)$ & 0.082 & \textbf{0.050} & 0.058 \\
 \multirow{1}{*}{bound} & $\hat{L}_S^{01}(Q)$ & 0.301 & \textbf{0.174} & 0.207 \\
 \hline
 \multirow{2}{*}{Gibbs} & $xe$ loss & 0.081 & \textbf{0.048} & 0.057 \\
                               & 01 loss & 0.292 & \textbf{0.163} & 0.197 \\
 \hline
 \multirow{2}{*}{$\mathbb E(Q)$} & $xe$ loss & 0.045 & \textbf{0.025} & 0.030 \\
                               & 01 loss & 0.141 & \textbf{0.096} & 0.110 \\
 \hline
 \multirow{2}{*}{Ensemble} & $xe$ loss & 0.0019 & \textbf{0.0011} & 0.0013 \\
                               & 01 loss & 0.133 & \textbf{0.093} & 0.103 \\
 \hline
 \multirow{1}{*}{$\mathbb E(Q_0)$} & 01 loss & 0.879 & 0.879 & 0.879 \\
 \hline
\end{tabular}
\label{table:metrics1}
\end{table}

Table \ref{table:metrics1} shows that $f_{rts}$ achieves the tightest certificate for the zero-one loss, and also serves as core model for the most accurate classifiers.

Figure \ref{fig:experiment_rand_xe} shows that the proposed RTS bound and the bypass procedure of \eqref{eq: maurer objective} achieve risk certificates significantly tighter than the state-of-the-art PBQ bound. This is particularly marked when the risk certificate involves the zero-one loss but the objective functions work with the cross-entropy loss (bottom plot). With respect to the performance of the contributions of this paper, Figure~\ref{fig:experiment_rand_xe} and Table~\ref{table:metrics1} show that the a priori stronger theoretical result corresponding to $f_{mb}$ achieves marginally better results than $f_{rts}$ in the bound of $L^{xe}$ while losing against it in every other metric. This is not a surprise; Section \ref{subsect.: insights} shows that if the gradient of the bound on $L(Q)$ is expressed as $c_L \nabla \hat{L}_S^{xe} + c_K \nabla K$, $f_{pbq}$ overestimates the ratio $c_K / c_L$ in Maurer's bound, while $f_{rts}$ underestimates it, effectively applying the ``KL-attenuating trick''. The consequence of this can be clearly seen in the bottom plot of Figure \ref{fig:experiment_rand_xe}.

\begin{table}[h!]
\centering
\caption{Results for the three modified objective functions used to train the MLP with the MNIST data. The objective functions are modified to directly optimize $l^{01}$ through the application of the procedure in \eqref{eq: maurer acc objective}. The objective function based on Maurer's bound obtains a better bound on $L^{01}$, which is the target objective in this experiment.} 
\small
\begin{tabular}{c c c c c} 
\hline
 - & Metric & $\Tilde{f}_{mb}$ & $\Tilde{f}_{rts}$ & $\Tilde{f}_{pbq}$ \\ [1pt] 
 \hline
 \multirow{6}{*}{Risk bound} & \multirow{2}{*}{on $L^{xe}$} & 0.205  & 0.221  & \textbf{0.182}  \\
                                                   & & $\pm$ 0.003 & $\pm$ 0.003 & $\pm$ 0.006  \\
                              &  \multirow{2}{*}{on $L^{01}$} & \textbf{0.325}  & 0.329  & 0.345  \\
                                                  &  & $\pm$ 0.006 & $\pm$ 0.005 & $\pm$ 0.021  \\
                              &  \multirow{2}{*}{$\mbox{KL}/n$} & 0.136  & 0.164  & 0.083  \\
                                                  &  & $\pm$ 0.003 & $\pm$ 0.003 & $\pm$ 0.001  \\
\hline
                               & $l^{01}(Q_0)$ & \multicolumn{3}{c}{0.899 $\pm$ 0.042}\\
  \hline                                                       
\end{tabular}
\normalsize
\label{table:metrics_acc}
\end{table}

\subsubsection{Optimizing for accuracy}\label{subsection: optim4acc}

The objective functions used to obtain the results in Table~\ref{table:metrics1} had a bounded version of the cross-entropy loss as optimization target. As discussed in Section~\ref{subsect.: insights}, to optimize the bound for the zero-one loss we must estimate $\hat{L}_S^{01}$ as a function of $\hat{L}_S^{xe}$. Figure \ref{fig:xeloss_vs_acc} makes a strong case to choose $r$ to be linear, and use the approximation $m\hat{L}_S^{xe} \approx \hat{L}_S^{01}$ in \eqref{eq: maurer acc objective}. A reasonable value for $m$ is estimated by dividing a rolling average of $L_{01}$ by a rolling average of $L_{xe}$. We denote by $\Tilde{f}_{pbq}$, $\Tilde{f}_{rts}$ and $\Tilde{f}_{mb}$ the objective functions resulting by applying this procedure to $f_{pbq}$, $f_{rts}$ and $f_{mb}$, respectively. As expected, Table \ref{table:metrics_acc} shows improved risk certificates for the 01 loss, and worsened risk certificates for the bounded cross-entropy loss when the objective functions pursue the optimization of the zero-one loss.

\begin{figure}[h!]
\centering
\includegraphics[height=6cm]{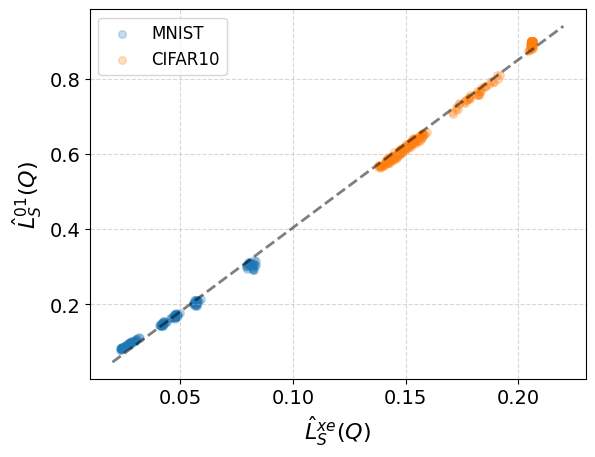}
\caption{Observed values of $\hat{L}_S^{01}$ plotted against the value of $\hat{L}_S^{xe}$ for the same experiment run. An approximately linear relationship is strongly implied.}
\label{fig:xeloss_vs_acc}
\end{figure}

\begin{figure}[h!]
\centering
\includegraphics[height=6cm]{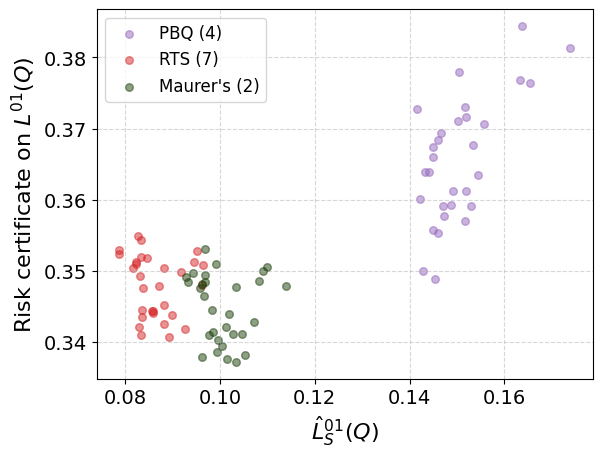}
\caption{Risk certificates computed for each MLP (one MLP per objective function and explored length scale of the prior $Q_0$) using the zero-one loss. The training objective functions work with $L^{01}$.}
\label{fig:experiment_rand_acc}
\end{figure}

Figure \ref{fig:experiment_rand_acc} shows that when applying the correction method to optimize for accuracy, the objective corresponding to the proposed stronger theoretical result, $\Tilde{f}_{mb}$, obtains slightly tighter risk certificates than the other proposal, $\Tilde{f}_{rts}$, and significantly tighter than the baseline $\Tilde{f}_{pbq}$.

\begin{table}[h!]
\centering
\caption{The same results as in Table \ref{table:metrics1} after introducing a KL-modulating coefficient. As predicted, similar risk certificates on $L^{01}$ are attained with every method.}
\begin{tabular}{c c c c c} 
\hline
 - & Metric & $\Tilde{f}_{pbq}$ & $\Tilde{f}_{rts}$ & $\Tilde{f}_{mb}$ \\ [1pt] 
 \hline
 \multirow{3}{*}{Risk bound} & $L^{xe}$ & 0.186 & 0.191 & 0.190 \\
                               & $L^{01}$ & 0.315 & 0.315 & 0.315 \\
                               & $\mbox{KL}/n$ & 0.107 & 0.117 & 0.114 \\
  \hline                                                       

\end{tabular}
\label{table:metrics_kl_mod}
\end{table}

\begin{table*}[htp]
\centering
\caption{Results for the three modified objective functions used to train CNNs of varying sizes on CIFAR-10. The objective functions are again modified to directly optimize $l^{01}$ through the application of the procedure in \eqref{eq: maurer acc objective}. Larger models give vacuous bounds and a KL of $0$. The 01 loss of the prior, $l^{01}(Q_0)$, is also provided.}
\begin{tabular}{c c c c c c} 
\hline
 Model layers & \# params & Metric & $\Tilde{f}_{pbq}$ & $\Tilde{f}_{rts}$ & $\Tilde{f}_{mb}$ \\ [1pt] 
 \hline
 \multirow{4}{*}{4} &  \multirow{4}{*}{9946} & Bound on $L^{01}$ & 0.841 $\pm$  0.101 & 0.745 $\pm$  0.011 & \textbf{0.738} $\pm$  0.007 \\
                               && Bound on $\hat{L}_S^{01}$ & 0.802 $\pm$  0.152 & 0.626 $\pm$  0.016 & \textbf{0.587} $\pm$  0.012 \\
                               && $\mbox{KL}/n$ & 0.006 $\pm$  0.009 & 0.032 $\pm$  0.002 & 0.051 $\pm$  0.004 \\
                               && $l^{01}(Q_0)$ & \multicolumn{3}{c}{0.906 $\pm$ 0.031}\\
\hline
\multirow{4}{*}{4} &  \multirow{4}{*}{315722} & Bound on $L^{01}$ & 0.823 $\pm$  0.023 & 0.773 $\pm$  0.007 & \textbf{0.758} $\pm$  0.005 \\
                               && Bound on $\hat{L}_S^{01}$ & 0.764 $\pm$  0.041 & 0.637 $\pm$ 0.012 & \textbf{0.576} $\pm$  0.011 \\
                               && $\mbox{KL}/n$ & 0.012 $\pm$  0.006 & 0.045 $\pm$  0.005 & 0.077 $\pm$  0.007 \\
                               && $l^{01}(Q_0)$ & \multicolumn{3}{c}{0.895 $\pm$ 0.018}\\
\hline
\multirow{4}{*}{5} &  \multirow{4}{*}{12266} & Bound on $L^{01}$ & >0.9 & >0.9 & \textbf{0.756} $\pm$  0.006 \\
                               && Bound on $\hat{L}_S^{01}$ & >0.9 & >0.9 & \textbf{0.601} $\pm$  0.010 \\
                               && $\mbox{KL}/n$ & 0.000 & 0.000 & 0.056 $\pm$  0.005 \\
                               && $l^{01}(Q_0)$ & \multicolumn{3}{c}{0.900 $\pm$ 0.008}\\
\hline

\end{tabular}
\label{table:metrics_cifar}
\end{table*}

\subsubsection{Applying the KL-modulating method}

This set of experiments illustrates the capabilities of the KL-modulating method introduced in Section~\ref{subsect: kl mod method} to align the gradient of any objective function with that of the zero-one loss. Table~\ref{table:metrics_kl_mod} shows that the three objective functions end up achieving very similar values for both risk certificates, especially for the one built on the zero-one loss. 

\subsection{CIFAR-10}

This set of experiments reproduces those in Section \ref{subsection: optim4acc} on the CIFAR10 dataset. Results in Table~\ref{table:metrics_cifar} show that the objective functions introduced in this paper combined with our method for optimizing the risk bound on accuracy produce nonvacuous bounds on performance on the CIFAR-10 dataset, which is to our knowledge a first in PAC-Bayes and NNs. Deeper NNs (9+ layers) such as the ones used on CIFAR-10 in~\cite{perez2021tighter} produce vacuous bounds without data-dependent priors. In our experiments, we use shallower models consisting of 4 and 5 layers.

Additionally, in the experiments in this section as well as Section~\ref{subsection: optim4acc}, the variance of the bound on the risk is generally significantly smaller than the variance of the loss of the prior. This highlights the stability of the learning algorithm, which is a desirable property, but also indicates there is room for improvement, as it shows that good minima are distributed somewhat evenly throughout the parametric space and the KL penalty restricts the search to a very small region.

\section{Conclusions}
This paper has introduced two new explicit PAC-Bayes bounds on the true risk of binary classifiers and has demonstrated that they are tighter than the state of the art. These bounds have been used to design objective functions that enable the training of NNs with the optimization of the risk certificate as main goal. Additionally, the paper has introduced a general form of the procedure used in~\citep{germain2009pac} to directly optimize non-explicit bounds on differentiable loss functions without resourcing to a relaxation of the inverse of the KL to an equation that enables a close-form computation of the gradient. The final contribution is a procedure to optimize the bound for non-differentiable loss functions such as the 01 loss, which is tied to the KL-attenuating method used in the literature. These theoretical results have been successfully applied to the calculation of risk certificates for neural networks trained on MNIST and CIFAR-10. The latter is particularly clarifying with respect to the efficacy of the results, as the contributions in sections \ref{sec: optim} and \ref{subsect.: insights} are both necessary to derive the nonvacuous risk bounds.

The results on CIFAR-10 highlight, in our opinion, the main issue of the PAC-Bayes framework. The KL term serves as a penalty that scales with the extrinsic dimensionality of the parametric space. Neural networks have a larger difference between intrinsic and extrinsic dimensionality~\citep{ansuini2019intrinsic}, and this difference becomes larger with model depth and architecture complexity. For this reason, PAC-Bayes bounds produce comparatively weak results on NNs. With the current state of the framework, it seems very unlikely that PAC-Bayes may at some point shed light on why deep learning is so effective, or explain related phenoms such as double descent. However, future research may change this. It focuses on two directions: designing architectures that inherently align with PAC-Bayesian assumptions, reducing the gap between intrinsic and extrinsic dimensionalities, and developing architecture-aware risk certification methods tailored to modern deep learning frameworks.

\newpage


\bibliography{bibliography}

\newpage

\clearpage
\appendix
\thispagestyle{empty}

\onecolumn
\aistatstitle{Appendix}

\section{Proofs} \label{appendix: proofs}

\begin{theorem*}[\ref{thm: TRP ineq}]
Let $f(p,q) := \frac{(q-p)^2}{2q(1-p)}$.
\begin{equation*}
  \mbox{kl}(p||q) \geq f(p,q) \qquad 0 < p \leq q < 1
\end{equation*}
\end{theorem*}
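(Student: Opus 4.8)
The plan is to fix $p \in (0,1)$ and treat the difference as a one-variable function of $q$ on $[p,1)$. Define
\[
  g(q) := \mbox{kl}(p\|q) - f(p,q) = p\log\frac{p}{q} + (1-p)\log\frac{1-p}{1-q} - \frac{(q-p)^2}{2q(1-p)}.
\]
Since $\mbox{kl}(p\|p)=0$ and $f(p,p)=0$, we have the boundary value $g(p)=0$. It therefore suffices to show $g$ is nondecreasing on $[p,1)$, i.e.\ that $g'(q)\geq 0$ for $q\geq p$; this is the standard ``anchor at equality and control the derivative'' strategy for KL-type inequalities.

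The next step is to compute the two derivatives. For the divergence, $\frac{\partial}{\partial q}\mbox{kl}(p\|q) = -\frac{p}{q}+\frac{1-p}{1-q} = \frac{q-p}{q(1-q)}$. For the relaxation, writing $f(p,q)=\frac{1}{2(1-p)}\cdot\frac{(q-p)^2}{q}$ and differentiating the $q$-dependent factor gives $\frac{\partial}{\partial q}\frac{(q-p)^2}{q}=\frac{(q-p)(q+p)}{q^2}$, so $\frac{\partial f}{\partial q}=\frac{(q-p)(q+p)}{2(1-p)q^2}$. Subtracting and factoring out the nonnegative factor $(q-p)$ yields
\[
  g'(q) = (q-p)\left[\frac{1}{q(1-q)} - \frac{q+p}{2(1-p)q^2}\right].
\]

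Because $q-p\geq 0$ by hypothesis, the whole problem reduces to showing the bracketed term is nonnegative. Multiplying the bracket by the positive quantity $q^2$ and clearing the remaining denominators (all positive since $0<p,q<1$) reduces the claim to $2q(1-p)\geq (q+p)(1-q)$. Expanding both sides and cancelling gives $q^2+q-p-pq\geq 0$, and the key observation is that the left-hand side factors cleanly as $(1+q)(q-p)$, which is manifestly nonnegative for $q\geq p>0$. Hence $g'(q)\geq 0$ on $[p,1)$, and combined with $g(p)=0$ this gives $g(q)\geq 0$, proving $\mbox{kl}(p\|q)\geq f(p,q)$.

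The computations are entirely routine; the only place requiring a little foresight is recognizing that the messy inequality $q^2+q-p-pq\geq 0$ collapses to the transparent product $(1+q)(q-p)$, which is what makes the monotonicity argument go through without any case analysis. I would present the derivative reduction first and keep the final polynomial factorization as the one genuinely illuminating line.
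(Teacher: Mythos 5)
Your proof is correct and follows essentially the same strategy as the paper's: anchor the difference $\mbox{kl}(p\|q)-f(p,q)$ at $q=p$, where it vanishes, and show its derivative in the second argument is nonnegative on $[p,1)$. The only cosmetic difference is that the paper parametrizes by $\Delta=q-p$ and verifies the sign of the derivative by bounding a denominator ($1-p-\Delta\leq 1-p$) and discarding terms, whereas you factor out $(q-p)$ and reduce to the exact factorization $(1+q)(q-p)\geq 0$; both are valid one-line verifications of the same monotonicity claim.
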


\begin{proof}
Let $g_p(\Delta) = \mbox{kl}(p||p+\Delta)-f(p,p+\Delta)$, for $\Delta \in [0, 1-p)$. $g_p(0)=0$, since $\mbox{kl}(p||p)=f(p,p)=0$. We first show $g'_p(\Delta) \geq 0$:
\begin{equation*}
    \begin{aligned}
        g'_p(\Delta) &= \frac{1-p}{1-p-\Delta} - \frac{p}{p + \Delta} - \frac{\Delta}{\left(1 - p\right)\left(p + \Delta\right)} 
        + \frac{\Delta^{2}}{2 \left(1 - p\right) \left(p + \Delta\right)^{2}} \\
        &= \frac{\Delta}{\left(1 - p - \Delta\right)\left(p + \Delta\right)} - \frac{\Delta}{\left(1 - p\right)\left(p + \Delta\right)} 
        + \frac{\Delta^{2}}{2 \left(1 - p\right) \left(p + \Delta\right)^{2}} \\
        &\geq \frac{\Delta}{\left(1 - p\right)\left(p + \Delta\right)} - \frac{\Delta}{\left(1 - p\right)\left(p + \Delta\right)} 
        + \frac{\Delta^{2}}{2 \left(1 - p\right) \left(p + \Delta\right)^{2}}  \\
        &= \frac{\Delta^{2}}{2 \left(1 - p\right) \left(p + \Delta\right)^{2}}  
        \geq 0
 \end{aligned}
\end{equation*}   

Since $g$ is continuous in its domain, the Mean Value Theorem states that $g_p(\Delta) = \Delta g'_p(\xi) + g_p(0) = \Delta g'_p(\xi)$ for some $\xi \in [0, \Delta]$. However, $g'_p(\xi) \geq 0$ for all $\xi \in [0, \Delta]$, so necessarily $g_p(\Delta) \geq 0$.

Finally, it is clear that $g_p(\Delta) \geq 0$ implies $\mbox{kl}(p||q) \geq f(p,q)$ if $q=p+\Delta$, and that each point in the set $\{(p,q): 0 < p \leq q < 1\}$ is contained in the domain of $g$.

\end{proof}

\begin{theorem*}[\ref{thm: rts}]
Let $f(p,q) := q - \sqrt{2qp - p^2}$.
\begin{equation*}
\mbox{kl}(p||q) \geq f(p,q) \qquad 0 < p \leq q < 1
\end{equation*}
\end{theorem*}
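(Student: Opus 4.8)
The plan is to reuse the one-variable reduction from the proof of Theorem~\ref{thm: TRP ineq}. Fix $p \in (0,1)$ and set $g_p(\Delta) := \mbox{kl}(p\,||\,p+\Delta) - f(p,p+\Delta)$ for $\Delta \in [0, 1-p)$; since $q = p+\Delta$ sweeps out exactly the slice $\{q : p \le q < 1\}$, proving $g_p(\Delta) \ge 0$ on this interval for every such $p$ establishes the claim on the whole region $\{0 < p \le q < 1\}$. Substituting $q = p+\Delta$ gives $f(p,p+\Delta) = p + \Delta - \sqrt{p^2 + 2p\Delta}$, from which $g_p(0) = 0$ because both $\mbox{kl}(p\,||\,p)$ and $f(p,p)$ vanish. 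As in the previous proof, it then suffices to show $g_p' \ge 0$ on $[0, 1-p)$ and invoke monotonicity (equivalently the Mean Value Theorem) to conclude $g_p(\Delta) \ge g_p(0) = 0$.

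The crux is therefore the sign of $g_p'$. Differentiating term by term gives $\frac{d}{d\Delta}\mbox{kl}(p\,||\,p+\Delta) = \frac{1-p}{1-p-\Delta} - \frac{p}{p+\Delta}$ and $\frac{d}{d\Delta} f(p,p+\Delta) = 1 - \frac{p}{\sqrt{p^2+2p\Delta}}$. I expect the useful move to be rewriting $\frac{1-p}{1-p-\Delta} = 1 + \frac{\Delta}{1-p-\Delta}$, so that the constant $1$ cancels against the $1$ coming from the derivative of $f$, leaving
\[
g_p'(\Delta) = \frac{\Delta}{1-p-\Delta} + p\left(\frac{1}{\sqrt{p^2+2p\Delta}} - \frac{1}{p+\Delta}\right).
\]

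Both summands are manifestly nonnegative on the domain: the first because $\Delta \ge 0$ and $1 - p - \Delta > 0$, and the second because $(p+\Delta)^2 = p^2 + 2p\Delta + \Delta^2 \ge p^2 + 2p\Delta$ forces $p+\Delta \ge \sqrt{p^2+2p\Delta} > 0$, so that $\frac{1}{\sqrt{p^2+2p\Delta}} \ge \frac{1}{p+\Delta}$ while the factor $p > 0$ preserves the sign. Hence $g_p' \ge 0$, which closes the argument. The only point to watch is the endpoint $\Delta = 0$, where $\sqrt{p^2+2p\Delta} = p$ and one must confirm the expression is well defined (it is, since $p > 0$); elsewhere everything is smooth. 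I do not anticipate a genuine obstacle here: unlike relaxations that force one to resolve a delicate sign condition, this derivative splits cleanly into two nonnegative pieces, so the main task is simply spotting that algebraic decomposition.
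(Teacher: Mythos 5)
Your proposal is correct and follows essentially the same route as the paper: the same reduction to $g_p(\Delta) = \mbox{kl}(p\,||\,p+\Delta) - f(p,p+\Delta)$, the same cancellation via $\frac{1-p}{1-p-\Delta} = 1 + \frac{\Delta}{1-p-\Delta}$, and the same key inequality $(p+\Delta)^2 \geq p^2 + 2p\Delta$ to handle the square-root term, followed by the Mean Value Theorem. The only cosmetic difference is that the paper drops the nonnegative term $\frac{\Delta}{1-p-\Delta}$ before bounding the remainder, whereas you keep both pieces and show each is nonnegative separately.
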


\begin{proof}
Let $g_p(\Delta) = \mbox{kl}(p||p+\Delta)-f(p,p+\Delta)$, for $\Delta \in [0, 1-p)$. $g_p(0)=0$, since $\mbox{kl}(p||p)=f(p,p)=0$. We first show $g'_p(\Delta) \geq 0$:
\begin{equation*}
    \begin{aligned}
        g'_p(\Delta) &= \frac{1-p}{1-p-\Delta} - \frac{p}{p + \Delta} + \frac{p}{\sqrt{p^2 + 2p\Delta}} - 1
        = 1 + \frac{\Delta}{1-p-\Delta} - \frac{p}{p + \Delta} + \frac{p}{\sqrt{p^2 + 2p\Delta}} - 1 \\
        &\geq -\frac{p}{p + \Delta} + \frac{p}{\sqrt{p^2 + 2p\Delta}}
        = -\frac{p}{\sqrt{(p + \Delta)^2}} + \frac{p}{\sqrt{(p + \Delta)^2 - \Delta^2}}
        \geq 0
    \end{aligned}
\end{equation*}

And the rest of the proof follows exactly the end of the proof of Theorem \ref{thm: TRP ineq}.

\end{proof}

\section{Derivation of the direct optimization of the bound with implicit differentiation} \label{appendix: mb reasoning}

Surrogate training objectives are necessary to optimize a function without known gradient, but this is not the case with Maurer's bound:

\begin{equation} \label{eq: kl-bound-explicit}
    L(Q) \leq \mbox{kl}^{-1}\left(\hat{L}_S(Q), \frac{\mbox{KL}(Q||Q^0) + \log(\frac{2\sqrt{n}}{\delta})}{n}\right)
\end{equation}

Let $q$ be the bound on $L(Q)$ given in \eqref{eq: kl-bound}, for an empirical loss $p := \hat{L}_S(Q)$ and a right-hand term $K$. Bound $q$ is the solution of the following equation:

\begin{equation*}
    \mbox{kl}(p||q) = K \qquad 0 < p \leq q < 1, K \geq 0
\end{equation*}

Let us define a function $f:\mathbb{R}^3 \longrightarrow \mathbb{R}$ that analyzes the space of solutions:

\begin{equation*}
    f(p,K,q) := \mbox{kl}(p||q) - K
\end{equation*}

Since $f$ is continuously differentiable and invertible in its domain $\{(p,K,q): 0<p \leq q<1, K \geq 0\}$, the Implicit Function Theorem states that there exists a continuously differentiable function $g$ such that $f(p,K,g(p,K)) = 0$. The gradient of this implicit function can be computed with:

\begin{equation*}
    \nabla g = \begin{bmatrix}
        \frac{\partial g}{\partial p}, \frac{\partial g}{\partial K}
    \end{bmatrix}
    = 
    -\left(\frac{\partial f}{\partial q}\right)^{-1}
    \begin{bmatrix}
        \frac{\partial f}{\partial p}, \frac{\partial f}{\partial K}
    \end{bmatrix}
\end{equation*}

Using the gradient descent algorithm to minimize $q$ in the parametric space of $Q$ means gradually updating $Q$ according to the gradient of $q$ with respect to $Q$. Let $\Theta$ be the parametric space of the problem we are trying to solve by gradient descent. In other words, each particular posterior $Q$ is represented by a particular $\theta$ in $\Theta$. Therefore the optimization of the bound with respect to $Q$ demands the computation of the gradients with respect to $\theta$ by means of the chain rule:

\begin{equation} \label{eq: maurer objective all}
    \begin{aligned}
    \nabla_\theta g(p(\theta), K(\theta)) &= \nabla_\theta p(\theta) \frac{\partial g} {\partial p} + \nabla_\theta K(\theta) \frac{\partial g} {\partial K} \\
    = - \left (  \nabla_\theta p \frac{\partial f } {\partial p} + \nabla_\theta K \frac{\partial f} {\partial K}  \right )  \left ( \frac{\partial f } {\partial q} \right )^{-1}
    &= - \xi \left (  \nabla_\theta p \left ( \log \frac{p}{q} - \log \frac{1-p}{1-q} \right ) + \nabla_\theta K \right )
    \end{aligned}
\end{equation}

where all the gradients are computed w.r.t $\theta$ and

\[
 \xi=\left (
           \frac{1-p}{1-q} - \frac{p}{q}
         \right )^{-1} 
\]

\end{document}